		\def\l{3em}
\newcounter{Question} 
\newtheorem{thm}{Claim}
\newtheorem{cor}{Corollary}
\begin{document}


\title{Variable selection with false discovery rate control in deep neural networks}



\renewcommand\Authfont{\fontsize{12}{14.4}\selectfont}
\renewcommand\Affilfont{\fontsize{9}{10.8}\itshape}
\author[1]{Zixuan Song}
\author[1,*]{Jun Li}
\affil[1]{Department of Applied and Computational Mathematics and Statistics, University of Notre Dame, Notre Dame, IN 46556, USA}
\affil[*]{To whom correspondence should be addressed. Tel: +1 574 631 3429; Fax: +1 574 631 4822; Email: jun.li@nd.edu}

\date{}
\maketitle


\doublespacing

\begin{abstract}
Deep neural networks (DNNs) are famous for their high prediction accuracy, but they are also known for their black-box nature and poor interpretability. We consider the problem of variable selection, that is, selecting the input variables that have significant predictive power on the output, in DNNs. We propose a backward elimination procedure called SurvNet, which is based on a new measure of variable importance that applies to a wide variety of networks. More importantly, SurvNet is able to estimate and control the false discovery rate of selected variables, while no existing methods provide such a quality control. Further, SurvNet adaptively determines how many variables to eliminate at each step in order to maximize the selection efficiency. To study its validity, SurvNet is applied to image data and gene expression data, as well as various simulation datasets.

\end{abstract}

Deep neural networks (DNNs) are a popular machine learning technique and have shown superior performance in many scientific problems. Despite of their high prediction accuracy, DNNs are often criticized for a lack of interpretation of how changes of the input variables influence the output. Indeed, for applications in many scientific fields such as biology and medicine, understanding the statistical models described by the networks can be as important as, if not more important than, the prediction accuracy. In a DNN, because of its nonlinearity and inherent complexity, generally one should not expect a concise relationship between each input variable and the output, such as the conditional monotonicity in linear regression or logistic regression. A more realistic approach for interpreting the DNN model can be selecting a subset of variables, among all input variables, that have significant predictive power on the output, which is known as ``variable selection''. This paper considers the variable selection problem in DNNs.

During the past decades, many methods have been proposed for this task. The variable selection methods for neural networks, similar to the ones for other machine learning techniques, can be broadly classified into three categories: filters, wrappers and embedded methods \cite{may_review_2011-1,guyon_introduction_2003,chandrashekar_survey_2014}. Filters select variables by information theoretic criteria such as mutual information \cite{battiti_using_1994} and partial mutual information \cite{may_non-linear_2008}, and the selection procedure does not involve network training. In contrast, both wrappers and embedded methods are based on the training of neural networks. Wrappers wrap the training phase with a search strategy, which searches through the set, or a subset, of all possible combinations of input variables and selects the combination whose corresponding network gives the highest prediction accuracy. A number of sequential \cite{sung1998ranking,maier1998use} and heuristic search strategies \cite{brill_fast_1992,tong_genetic_2010,sivagaminathan_hybrid_2007} have been used. Embedded methods, unlike wrappers, select variables during the training of the network of interest. This can be done by gradually removing/pruning weights or variables according to their importance measured in various ways (a detailed review is given in the Methods section) or by incorporating a regularization term into the loss function of the neural network to impose sparsity on the weights \cite{grandvalet_outcomes_1999,chapados_input_2001,simila_combined_2009,scardapane_group_2017}. For a more exhaustive review of variable selection methods in neural networks, see \cite{may_review_2011-1,zhang_neural_2000}.

While a lot of variable selection methods have been developed for neural networks, there are still challenges that hinder them from being widely used. First and foremost, these methods lack a control on the quality of selected variables. When selecting from a large number of variables, a standard way of quality control is to calculate false discovery rate (FDR) \cite{benjamini1995controlling} and control it at a certain level, particularly in biological and medical studies. In the context of variable selection, FDR is the (expected) proportion of false positives among all variables called significant; for example, if 20 variables are selected (called significant), and two of them are actually null, then the FDR is $2/20=0.1$. However, no variable selection methods for neural networks so far have tried to estimate FDR or keep FDR under control. Second, among these methods, many were developed for specific types of networks, especially very shallow networks, and they do not work, or work inefficiently, for deeper networks. Third, many of the methods are not applicable to large datasets, on which their computational loads can be prohibitively high.

In this paper, we develop a method called SurvNet for variable selection in neural networks that overcomes these limitations. It is an embedded method that gradually removes least relevant variables until the FDR of remaining variables reaches a desired threshold. Figure \ref{fig:1} is the flowchart of SurvNet. It starts by adding a set of simulated input variables called ``surrogate variables'' that will help estimate FDR and training a network with all variables, including both original and surrogate variables. Then it calculates the importance of each variable (original or surrogate) and eliminates the variables that are least important. When eliminating a variable, its corresponding input neuron and all outgoing connections of this neuron are removed from the network. After this, SurvNet estimates the FDR of the original variables that remain in the model. If the estimated FDR is greater than the pre-set threshold, SurvNet will go back to the step of training the (updated) network; otherwise, the elimination stops, and all remaining surrogate variables are removed before the final model is trained. Note that each updated network is trained using the values of weights in the last trained network as initial values for a ``warm start''.

There are three major novelties in this backward elimination procedure of SurvNet. First, it proposes a new measure/score of variable importance, which works regardless of the type of problems (classification or regression), the number of output neurons (one or multiple), and the number of hidden layers (one or multiple) in neural networks. In fact, this score can be readily computed for networks with arbitrary depths and activation functions. Second, SurvNet proposes an easy and quick way of estimating FDRs. Statistical estimation of FDRs requires obtaining the null distribution of the importance scores, that is, the distribution of the scores of irrelevant variables \cite{storey2003statistical}. This is often done by permuting the output values of samples and training multiple independent models in parallel, each of which corresponds to a permuted dataset, but the computational cost is typically unaffordable for neural networks. SurvNet proposes a distinct way. It generates a set of null variables which serve as surrogates of the (unknown) null original variables to obtain the null distribution. With the introduction of surrogate variables, an estimate of FDR can be given by a simple mathematical formula without training a large number of networks at each step. Third, instead of eliminating one variable or any pre-specified number of variables at each step, SurvNet is able to adaptively determine an appropriate number of variables to eliminate by itself. This number, expressed in a concise mathematical formula, makes the elimination highly efficient while having the FDR well controlled on the desired level. The formula includes a parameter called ``elimination rate'', which is a constant between 0 and 1 and controls the ``aggressiveness'' of elimination. When this parameter is chosen to be 1, the elimination is the most aggressive.

Put together, SurvNet is a computationally efficient mechanism for variable selection in neural networks that needs little manual intervention. After setting the initial network structure, an FDR cutoff $\eta ^*$ (0.1 is the most commonly used value), and an elimination rate $\varepsilon$ (1 is often an acceptable choice), the elimination procedure will automatically determine how many and which variables to eliminate at each step and stop when the estimated FDR is no greater than $\eta ^*$.

\section*{Data and results}
We applied SurvNet to digits 4's and 9's in the MNIST database (Dataset 5), a single-cell RNA-Seq dataset (Dataset 6), as well as four simulation datasets (Datasets 1 $\sim$ 4).

MNIST \cite{lecun1998gradient} contains 60,000 training images (including 5,000 validation images) and 10,000 testing images of ten handwritten digits from 0 to 9. Each image contains $28\times28 = 784$ pixels, which are treated as 784 input variables.

Single-cell RNA-Seq \cite{kolodziejczyk2015technology} is a biological technique for measuring gene expression in cells. Along with other single-cell techniques, it was recognized as the ``2018 Breakthrough of the Year'' by the \textit{Science} magazine on account of its important applications in biomedical and genomic research. In single-cell RNA-Seq data, the samples are the cells, the inputs are expression levels of individual genes, and the output is the cell type. Biologically, it is often believed that the cell type is determined by a small set of genes, and thus single-cell RNA-Seq data can be a good choice to study variable selection. 

The classification accuracy of SurvNet for these real data was evaluated by several criteria, including initial test loss, initial test error, final test loss and final test error. Here ``test loss'' and ``test error'' refer to the cross-entropy loss and the misclassification rate on the test data, respectively; and their ``initial'' and ``final'' values were derived by using the network with all original variables and with selected variables only, respectively. See Supplementary Materials for details about how they were calculated.

For these real datasets (Datasets 5 $\sim$ 6), however, it is unknown which variables are truly significant. Hence we relied on simulated data to quantitatively assess the accuracy of selected variables, the most important aspect of SurvNet. Four datasets were simulated under different schemes. Datasets 1 $\sim$ 3 were for classification and Dataset 4 was for regression.

Except for the MNIST data, each dataset was divided into a training set and a test set, with 80\% of the samples in the training set and 20\% in the test set, and 30\% of training samples were further separated for validation (used to decide when to stop training, see Supplementary Materials).

SurvNet was implemented on TensorFlow 1.8 \cite{tensorflow2015-whitepaper}. For each dataset, we used a common and simple network structure with two hidden layers, which consisted of 40 and 20 nodes respectively. The ReLU activation function was used, together with a batch size of 50 and a learning rate of 0.05 (0.01 for the regression problem).

In our experiments, Datasets 1 $\sim$ 4 were simulated 25 times, and the results of variable selection using SurvNet are averaged over these 25 simulations. For Dataset 1, we demonstrate how SurvNet works step by step to look into its behavior, and we also study the influence of the elimination rate by setting $\varepsilon$ to different values. On other simulation datasets, results are similar and thus are not given.

\subsection*{Dataset 1: simulated data with independent variables}
We simulated a $10,000\times784$ matrix $\bm{X}$, with $x_{ij} \sim {\rm i.i.d.}\ U(0,1)$ for $1 \le i \le 10,000$, $1 \le j \le 784$, where $U$ means uniform distribution, and treated its rows and columns as samples and variables respectively. The samples were randomly assigned into two classes $C_1$ and $C_2$ of equal size. Then $p^\prime = 64$ variables were chosen at random and their values in one class were shifted: for each of these variables, we generated a shift value $\delta_j \sim U(0.1,0.3)$, with its direction having equal probability of being positive and negative. More precisely, $x_{ij} \leftarrow x_{ij} + (2\alpha_j-1) \cdot \delta_j$ for $i \in C_1$, $j \in \Omega_{p^\prime}$, where $\alpha_j \sim {\rm Bernoulli}(\frac{1}{2})$ and $\Omega_{p^\prime}$ was the set of $p^\prime$ randomly chosen variables. In this way, the 784 variables were independent from each other, and the 64 variables were significant because each of them had different mean values in the two classes. This ``independent-variable differential-mean'' scheme is a very widely used simulation scheme for studying variable selection.

We ran SurvNet on this dataset with an FDR cutoff $\eta^* = 0.1$ and an elimination rate $\varepsilon = 1$. To demonstrate how SurvNet works step by step, Figure \ref{fig:2}a shows, in one instance of simulation, the number of original variables and surrogate variables left at each step of a selection process as well as the corresponding estimated FDR. The number of variables to be eliminated in the subsequent step is also displayed, and notice that our algorithm was efficient: it eliminated a large number of variables at the beginning and gradually slowed down the elimination as the number of remaining variables decreased and the estimated FDR got closer to the desired value. When the estimated FDR became less than 0.1, the selection process stopped, and the final model turned out to contain all the 64 truly significant variables. On the same data, we studied the influence of elimination rate, and the results of using $\varepsilon = 1$ and $\varepsilon = 0.5$ are shown in Figure \ref{fig:2}b and \ref{fig:2}c. It is found that while a larger elimination rate led to a faster selection process with fewer steps, the number of variables left at the end of the selection was almost the same (Figure \ref{fig:2}b). Moreover, regardless of elimination rate, our method gave an accurate estimate of FDR, and the true value of FDR was well controlled throughout the selection process (Figure \ref{fig:2}c).

The overall performance of SurvNet under $\eta^* = 0.1$ and $\varepsilon = 1$ was summarized in Table \ref{tab:1}. The test loss and test error on the model with selected variables were both less than those on the model that contains all original variables, indicating enhanced predictive power of the network. More importantly, SurvNet accurately selected the significant variables: it kept 61.92 of the 64 significant variables, along with 7.42 false positives, and the selected variables had an FDR of 0.105, which was very close to the cutoff value 0.1. The estimated FDR, 0.093, was also close to the actual FDR.

The results under different elimination rates ($\varepsilon = 1$ and $\varepsilon = 0.5$), different FDR cutoffs ($\eta^* = 0.1$ and $\eta^* = 0.05$), and different numbers of significant variables ($p^\prime = 64$ and $p^\prime = 32$) are shown in Table S1.

\subsection*{Dataset 2: simulated data with correlated variables}
We considered correlated variables in this simulation dataset. It is well known that variable dependence often makes FDR estimation difficult \cite{benjamini2001control, heesen2015inequalities}, and we wondered whether SurvNet was still valid in this case. Images are perfect examples of data with correlated variables, as the value of a pixel usually highly depends on the value of its surrounding pixels. Here we used all images of digit 0 in the MNIST data and randomly assigned them into two classes, and all variables were supposed to be non-significant for classification at this time. Then we picked $p^\prime = 64$ variables and shifted their mean values in one class in the same way we did in Dataset 1.

Table \ref{tab:1} shows the performance of SurvNet under $\eta^* = 0.1$ and $\varepsilon = 1$. Similar to that on Dataset 1, the test loss decreased after variable selection. The test error before and after variable selection were both zero, possibly due to the positive correlation between pixels, which reduced the difficulty of the classification problem. Although SurvNet identified slightly fewer significant variables (59.36 of the 64 significant variables) than it did in Dataset 1, the FDR 0.107 was still very close to the desired cutoff, and its estimated value 0.094 was accurate as well. For results under different sets of parameter values, see Table S2.

\subsection*{Dataset 3: simulated data with variance-inflated variables}
The third simulation scheme is very challenging. Unlike in the previous two datasets, the significant variables did not differ in the mean values of the two classes; instead, they differed only in the variances. Same as in Dataset 1, we simulated a $10,000\times784$ matrix $\bm{X}$ whose element $x_{ij} \sim {\rm i.i.d.}\ U(0,1)$ and divided the samples into two equal-size classes $C_1$ and $C_2$. But then, to make $p^\prime=64$ randomly chosen variables significant, we let $x_{ij} \leftarrow x_{ij} + (2\alpha_{ij}-1) \cdot \delta_{ij}$ for $i \in C_1$, $j \in \Omega_{p^\prime}$, where $\alpha_{ij} \sim {\rm Bernoulli}(\frac{1}{2})$, and $\delta_{ij} \sim U(0.8,1)$. Note that different from the first two simulation schemes, here $\delta$ and $\alpha$ depend on both $i$ and $j$. Thus the means of these variables remained unchanged, but their standard deviations inflated from 0.29 to 0.95 (see Supplementary Materials for calculations). In other words, the only difference between the two classes was that the values of 64 out of 784 pixels were ``noisier''. In this case, classifiers and tests based on discrepancies in the mean values would fail. For example, t-test merely identified 0.20 (averaged over 25 instances) of the 64 significant variables.

The results of applying SurvNet with $\eta^* = 0.1$ and $\varepsilon = 1$ were shown in Table \ref{tab:1}, and the first thing to notice is the dramatic improvement of classification accuracy on the test set. While the test error given by the network with all 784 vriables was 49.42\%, it dropped to 0.47\% after variable selection by SurvNet; that is, from an almost random guess to an almost perfect classification. This implies that the variable selection gives back to the DNN the ability of utilizing all types of information useful for classification, which was masked by the overwhelming irrelevant variables. 
Among the selected variables, 23.00 were truly significant variables, and 3.40 were false positives. Although only 36\% of the significant variables were successfully identified, the FDR of the remaining variables, 0.114, was close to the cutoff, and the estimated FDR was acceptably accurate.

We then scrutinized the selection process of SurvNet on this dataset, and found that the reason only a proportion of significant variables were retained was that the initial network that made almost random guesses could not accurately determine the importance of variables and thus many significant variables were removed. As the selection proceeded, the network gained higher classification accuracy and also stronger ability to distinguish the significant variables. When we used a smaller elimination rate, say $\varepsilon = 0.5$, SurvNet was able to keep a larger proportion of significant variables (see Table S3 for details).

\subsection*{Dataset 4: simulated regression data}
Suppose the data matrix is $\bm{X}=(x_{ij})_{10,000 \times 784}$, and each $x_{ij} \sim U(-1,1)$. Of the 784 variables, 64 were randomly chosen as significant variables (denoted by $x_{k_j}, j=1,\ldots,64$), and $y$ was set to be the linear combination of $x_{k_j}$ or its nonlinear functions, plus a few interaction terms and a random error term:
\[
\begin{split}
y_i=\sum_{j=1}^{16} \beta_j x_{ik_j} + \sum_{j=17}^{32} \beta_j \sin x_{ik_j} + \sum_{j=33}^{48} \beta_j e^{x_{ik_j}} + \sum_{j=49}^{64} \beta_j \max (0,x_{ik_j}) \\
+ \beta_1^\prime x_{ik_{15}} x_{ik_{16}} + \beta_2^\prime x_{ik_{31}} x_{ik_{32}} + \beta_3^\prime x_{ik_{47}} x_{ik_{48}} + \beta_4^\prime x_{ik_{63}} x_{ik_{64}} + \varepsilon_i,
\end{split}
\]
where $\beta_j=(2\alpha_j-1) \cdot b_j$, $\alpha_j \sim {\rm Bernoulli}(\frac{1}{2})$, $b_j \sim U(1,3)$, $\varepsilon_i \sim N(0,1)$ for $i=1,\ldots,10,000$, $j=1,\ldots,64$, and $\beta_1^\prime$, $\beta_2^\prime$, $\beta_3^\prime$, $\beta_4^\prime$ have the same distribution as $\beta_j$.

We ran SurvNet with $\eta^*=0.1$ and $\varepsilon=1$ on 25 instances of simulation and the results are reported in the format of mean $\pm$ standard deviation. After variable selection, the test loss was reduced greatly, from $33.013\pm27.059$ to $8.901\pm1.988$. The number of remaining original variables was $71.16\pm5.02$ on average, and $63.96\pm0.20$ of the 64 significant variables were kept. The actual FDR of the selected variables was $0.097\pm0.061$, close to the desired value 0.1, and the estimated FDR, $0.094\pm0.004$, was accurate. The results suggest that SurvNet is highly effective for this regression dataset.

\subsection*{Dataset 5: digits 4 and 9 in MNIST}
After four simulation datasets, we applied SurvNet to the MNIST data. Here we only used the images of two digits that look alike (4 and 9), as they are similar in most pixels and are only different in pixels in certain regions. In Figure \ref{fig:3}a, we show two representative 4's that differ in the width of top opening and two representative 9's that differ in the presence of a bottom hook. The four regions circled in red are likely to be most significant in differentiating 4's and 9's, especially the region in the upper middle denoting whether the top is closed or open, and the region in the lower middle denoting whether there is a hook at the bottom.

From left to right, Figure \ref{fig:3}b shows the pixels that were selected by SurvNet under four combinations of FDR cutoffs ($\eta^* = 0.1$ or 0.01) and elimination rates ($\varepsilon = 1$ or 0.5). The colors display the relative importance, defined by equation \ref{eq:Sj_L2} (see Methods), of the selected pixels, and a darker color means greater importance. We found that different parameter settings gave quite consistent results, and they all picked out the four regions that were speculated to be significant.

\subsection*{Dataset 6: single-cell RNA-Seq data}
Chen \textit{et al.} performed single-cell RNA-Seq analysis of the adult mouse hypothalamus and identified 45 cell types based on clustering analysis \cite{chen_single-cell_2017}. We used 5,282 cells in two non-neuronal clusters, oligodendrocyte precursor cell (OPC) and myelinating oligodendrocyte (MO), which reflected two distinct stages of oligodendrocyte maturation. Following a standard pre-processing protocol of single-cell RNA-Seq data \cite{hwang2018single}, we filtered out the genes whose expression could not be detected in more than 30\% of these cells, which left 1,046 genes for further analysis, and used $\log({\rm TPM}+1)$ for measuring gene expression levels, where TPM standed for ``transcripts per million''.

With $\eta^*=0.01$ and $\varepsilon=1$, SurvNet selected 145 genes in one realization. Figure \ref{fig:4} shows the heatmap of the expression values of these genes, in which rows are genes and columns are cells. The top banner shows the class labels for the samples. For gene expression data, the set of significant genes are typically identified by ``differential expression'' analysis, which finds differences in the mean expression levels of genes between classes. Indeed, as the heatmap shows, most genes have evidently different mean expression levels in the OPCs and MOs. However, among the 145 significant genes identified by SurvNet, 16 have log-fold-changes (logFCs) less than 1, meaning that their mean expression levels are not very different in the OPCs and MOs. In Figure \ref{fig:4}, these genes are marked in purple on the left banner, in contrast to green for the other genes. Actually, Bartlett's test, which tests the difference in variance, claimed that 14 of these 16 genes had unequal variances in the two groups of cells (p-value < 0.05); thus, they are instances of variance-inflated variables selected by SurvNet, in addition to the ones in Dataset 3. Again, SurvNet demonstrates its ability to identify various types of significant variables, not just variables with different means.

Further, the functional interpretations of the selected genes match the biological characteristics of OPCs and MOs. We conducted Gene Ontology (GO) analysis using DAVID 6.8 program \cite{huang2009systematic,da2009bioinformatics}, and found that these genes were likely to play an important role in a number of biological processes, for example, substantia nigra development (with p-value $8.8\times10^{-9}$, fold enrichment 29.1), nervous system development ($1.8\times10^{-5}$, 4.8), positive regulation of dendritic spine development ($1.2\times10^{-3}$, 19.0) and astrocyte differentiation ($3.2\times10^{-3}$, 34.5). In particular, oligodendrocyte differentiation ($1.8\times10^{-3}$, 16.2) defines the transition from OPCs to their mature form (MOs) \cite{rubio2004vitro,barateiro2014temporal}, and myelination ($7.1\times10^{-5}$, 13.8), which is the process of generating myelin and is a kind of axon ensheathment ($3.5\times10^{-2}$, 55.2), is unique to MOs \cite{menn2006origin,barateiro2014temporal}. Corresponding to these processes, the selected genes were also enriched for cellular components such as myelin sheath ($2.4\times10^{-19}$, 16.2), axon ($1.2\times10^{-5}$, 5.0) as well as internode region of axon ($2.9\times10^{-4}$, 106.1), and molecular functions like structural constituent of myelin sheath ($3.1\times10^{-6}$, 115.3). Besides, among the 16 selected genes whose expression levels had no obvious differences in the OPCs and MOs, \textit{Cd9} was involved in oligodendrocyte development \cite{terada2002tetraspanin}, and \textit{Ckb}, \textit{Actb}, \textit{Tuba1a} as well as \textit{Gpm6b} were related to myelin sheath or myelin proteolipid protein PLP \cite{jahn2009myelin,werner2013critical}.

After variable selection, the test loss was reduced from $4.230\times10^{-3}$ to $3.460\times10^{-3}$, and the test error dropped from 0.083\% to 0.076\% (averaged over 25 realizations).

\section*{Conclusions and discussion}
We have presented a largely automatic procedure for variable selection in neural networks (SurvNet). It is based on a new measure of variable importance that applies to a variety of networks, deep or shallow, for regression or classification, and with one or multiple output units. More importantly, SurvNet is the first method that estimates and controls the FDR of selected variables, which is essential for applications where the trustworthiness of variable selection is pivotal. By introducing surrogate variables, it avoids training multiple networks in parallel. SurvNet also adjusts the number of variables to eliminate at each step, and the ``warm start'' nature of backward elimination facilitates the training of networks. On multiple simulation datasets and real datasets, SurvNet has effectively identified the significant variables and given a dependable estimate of FDR.

SurvNet takes advantages of modern developments of DNNs. The importance scores of input variables that are based on derivatives with respect to the inputs can be efficiently computed by functions in deep-learning packages such as TensorFlow, PyTorch, and Theano. Moreover, advances in optimization techniques and computation platforms have made the training of DNNs highly scalable. In particular, DNNs can accommodate a large number of input variables, which enables the introduction of surrogate variables.

Given a dataset, SurvNet may select different sets of significant variables at different runs owing to the randomness originated from the generation of surrogate variables and the training of networks (e.g., the random initial values of weights). While the former is unique to SurvNet, the latter is ubiquitous to any applications of neural networks. The randomness caused by generating surrogate variables may be lowered by, for example, using a larger number of surrogate variables or assembling results from multiple runs, but this randomness should not be a major concern if it is not much larger than the inevitable randomness coming from network training. To study this, we take Dataset 5 as an example. Using $\eta^*=0.1$ and $\varepsilon=1$, we ran SurvNet 25 times, and found that SurvNet selected $114.16\pm11.36$ variables; and the overlapped proportion of the selected variables in each pair of realizations was approximately 0.77. These results reflected both sources of randomness. Then we fixed the surrogate variables in each realization, and SurvNet selected $118.32\pm7.94$ variables, with the overlapped proportion of the selected variables of each pair of realizations around 0.79. This indicates that for this dataset, the randomness brought by surrogate variables was much less than that by the training of networks. And (hopefully) as peace of mind, some other well-known techniques for statistical tests and variable selection, such as permutation tests and bootstrap tests (and especially, parametric bootstrap tests), also have extra randomness caused by permutations or random number generations, but they are still very widely used.

Next we discuss how many surrogate variables should be generated. In all experiments in this paper, we simply set the number of surrogate variables ($q$) to be the same as the number of original variables ($p$). A larger $q$ may lower the randomness brought by the surrogate variables and thus give a more stable selection of variables and a more accurate estimate of FDR. These improvements can be noticeable and worth pursuing when the number of original variables is small. On the other hand, a larger number of surrogate variables may increase the computational load. As a rule of thumb, we recommend using $q=p$ for datasets with moderate to large sample size, and $q$ can be a few times larger than $p$ if $p$ is small and be smaller than $p$ if $p$ is very large.

Although variable selection is critical to many real applications and is often considered one of the most fundamental problems in machine learning \cite{trevor2009elements, tibshirani2015statistical}, it is worth noting that this task does not apply to certain problems or certain types of DNNs. As an example, for some image datasets like ImageNet \cite{deng2009imagenet}, deep convolutional neural networks are often a good choice due to their translation invariance characteristics, as the object of interest, such as a dog, may appear at any position in an image and thus theoretically every pixel should be relevant. Also, in the area of natural language processing, where recurrent neural networks are often used, the number of input variables (i.e. the length of input sequence) is not fixed and variable selection makes little sense.

The main aim of variable selection is to identify significant variables, which may, for example, shed light on the mechanisms of biological processes or guide further experimental validation. Apart from that, an additional aim may be to improve the classification accuracy. Although we did observe an improvement of generalization accuracy on all our simulated and real datasets, such an improvement is not guaranteed even if the variable selection procedure works perfectly. In some datasets, except for a set of significant variables, all other variables are almost completely irrelevant to the outcome, and variable selection may give extra power in prediction. However, in some other datasets, the relevances of variables are not polarized; there are many variables each having a very small influence on the output, but their accumulative contribution is non-negligible. For these datasets, such variables are likely to be ruled out during selection since it is hard to confidently determine their individual significance, but ignoring all of them could cause a loss of prediction power.

\section*{Methods}

\subsection*{Measures of variable importance}

\subsubsection*{Notation}
We use a tuple ($\bm{x}$,$\bm{y}$) to represent the input and the output of the network, with $\bm{y}$ being either one-dimensional or multi-dimensional. $x_j$ denotes the $j^\mathrm{th}$ component of $\bm{x}$, namely the $j^\mathrm{th}$ variable, and ($\bm{x}^{(i)}$,$\bm{y}^{(i)}$) ($i=1,\ldots,n$) is the $i^\mathrm{th}$ sample, where $n$ is the total number of samples (in the training set). Given a proper form of the loss $L(\cdot,\cdot)$, the loss function $L^*=\sum_{i=1}^n L(\bm{y}^{(i)},f(\bm{x}^{(i)}))$, where $f$ denotes the output function of the network. The most popular choices for $L(\cdot,\cdot)$ are the squared error loss for regression problems and the cross-entropy loss for classification problems.

\subsubsection*{Existing measures}
Many statistics have been proposed to measure the importance of variables in neural networks, and they generally fall into two categories \cite{tetko_neural_1996, steppe_feature_1997}.

One category of methods estimate the importance of $x_j$, denoted by $S_j$, based on the magnitudes of the connection weights in the network \cite{sen_predicting_1995, yacoub_hvs:_1997, garson_interpreting_1991, nath_determining_1997, gevrey_review_2003}. A simple example is the sum of absolute values of input weights \cite{sen_predicting_1995}, but larger values of weights in the input layer do not mean greater importance if connections in hidden layers have small weights, and a better alternative is to replace the input weights with the products of the weights on each path from this input to the output \cite{yacoub_hvs:_1997}. These measures were developed for networks with only one hidden layer, and they are unlikely to work well for deeper networks as the outgoing weights of a neuron does not reflect its importance once the neuron is inactive (e.g., when the input of a sigmoid neuron is far from zero or the input of a ReLU neuron is negative).

The other category of methods estimate $S_j$ by the sum of influences of the input weights on the loss function, i.e. $S_j=\sum_{k \in \Omega_j} \delta L^*_k$, where $\Omega_j$ is the set of outgoing weights from the $j^\mathrm{th}$ input neuron, and $\delta L^*_k$ is the increment of the loss function caused by the removal of weight $w_k$ \cite{tetko_neural_1996}. $\delta L^*_k$ can be approximated by a Taylor series of the loss function using first-order terms \cite{mozer_skeletonization:_1989, karnin_simple_1990} or second-order terms \cite{lecun_optimal_1990, cibas_variable_1994, hassibi_second_1993}. However, it is unclear why $S_j$ equals the (unweighted) sum of $\delta L^*_k$'s.

Apart from these two major categories of measures, it was also proposed to use $S_j=\frac{\partial f}{\partial x_j}$, i.e. $S_j=\frac{\partial y}{\partial x_j}$, when the output $y$ is one-dimensional \cite{dimopoulos_use_1995,dimopoulos_neural_1999}. But it is unclear how $S_j$ should be defined when there are multiple output units. Let $y_1,\ldots,y_K$ be the output values of $K$ output units, and one definition of $S_j$ was given by $S_j=\sum_{k=1}^{K} |\frac{\partial y_k}{\partial x_j}|$ \cite{ruck_feature_1990}. However, using this summation seems problematic in some cases, especially when $y_1,\ldots,y_K$ are the outputs of softmax functions.

\subsubsection*{Our new measure}
We propose a simple and direct measure of the importance of variable $j$ based on $\frac{\partial L}{\partial x_j}$, which describes how the loss changes with $x_j$. There are a few advantages of using $\frac{\partial L}{\partial x_j}$. First, regardless of the structure of the network and whether the output(s) is/are continuous or categorical, $L$ is always well defined since it is the target for the optimization/training of the network. Thus the proposed measure is applicable to a wide variety of networks. Second, no matter how many output units there are, $L$ is always a scalar and hence $\frac{\partial L}{\partial x_j}$ is always a scalar. There is no trouble in how to combine effects from multiple output units. Third, $\frac{\partial L}{\partial x_j}$ is easily computable with the backpropogation method, and popular frameworks/libraries for DNN computations (e.g., TensorFlow, PyTorch and Theano) all use differentiators that efficiently compute partial derivatives (gradients) of arbitrary forms.

Note that $\frac{\partial L}{\partial x_j}$ is a function of the tuple ($\bm{x}$,$\bm{y}$), and hence it is natural to estimate it by its mean over all observations in the training set. To avoid cancellation of positive and negative values, we measure the importance of $x_j$ by the mean of absolute values
\begin{equation}
	S_j=\frac{1}{n} \sum_{i=1}^n |\frac{\partial L}{\partial x_j}(\bm{y}^{(i)},f(\bm{x}^{(i)}))|,
	\label{eq:Sj_L1}
\end{equation}
or the mean of squares
\begin{equation}
	S_j=\frac{1}{n} \sum_{i=1}^n \frac{\partial L}{\partial x_j}(\bm{y}^{(i)},f(\bm{x}^{(i)}))^2,
	\label{eq:Sj_L2}
\end{equation}
where $\frac{\partial L}{\partial x_j}(\bm{y}^{(i)},f(\bm{x}^{(i)}))$ is the value of $\frac{\partial L}{\partial x_j}$ at the $i$'th training sample.

The importance scores given by equation \ref{eq:Sj_L1} and equation \ref{eq:Sj_L2} implicitly assume that all the input values have similar range, which is typically the case for DNNs, since it is common practice to standardize/scale the variables before supplying them to the network for the sake of faster and more stable training of the network \cite{bishop1995neural, lecun2012efficient}. If this is not the case, we suggest the score in equation \ref{eq:Sj_L1} be multiplied by the (sample) standard deviation of $x_j$ and the score in equation \ref{eq:Sj_L2} be multiplied by the (sample) variance of $x_j$.

Note that in the case of multiple linear regression, $L = \frac{1}{2} (y-\hat{y})^2 = \frac{1}{2} (y-\sum_j \beta_j x_j)^2$, where $y$ is a scalar response and $\beta_j$ is the $j^\mathrm{th}$ regression coefficient, then $\frac{\partial L}{\partial x_j}=-(y-\hat{y})\beta_j$. Thus, $S_j$ is defined as $|\beta_j| \cdot \frac{1}{n} \sum_{i=1}^n |e_i|$ or $\beta_j^2 \cdot \frac{1}{n} \sum_{i=1}^n e_i^2 $ by (1) and (2) respectively, where $e_i=y^{(i)}-\hat{y}^{(i)}$. Note that $S_j$ is proportional to $|\beta_j|$ or $\beta_j^2$ as $\frac{1}{n} \sum_{i=1}^n |e_i|$ and $\frac{1}{n} \sum_{i=1}^n e_i^2$ are constants. Therefore, both of them are reasonable measures of the contribution of the $j^\mathrm{th}$ variable, and they are actually equivalent in this case. The meaning of $S_j$ in some other special cases, such as linear regression with multiple outputs and logistic regression with one or multiple outputs, is elaborated in Supplementary Materials.

All results in the main text were obtained using equation \ref{eq:Sj_L2}. Results obtained using equation \ref{eq:Sj_L1} (given in Supplementary Materials) are not significantly different.

\subsection*{Elimination procedure with FDR control}
In this section, we first introduce how we estimate FDR and then talk about how we use this estimate to determine the number of variables to eliminate at each step.

\subsubsection*{Introduction of surrogate variables}
The key of estimating FDR \cite{storey2003statistical} is to estimate/generate the null distribution of the test statistic. In our case, it is to obtain the distribution of the importance score $S_j$ defined by equation \ref{eq:Sj_L2} or equation \ref{eq:Sj_L1} for variables that are not significant. Since the network is a complicated and highly nonlinear model, a theoretical distribution that applies to various network structure and various types of data may not exist. This null distribution needs to be obtained for the network and the data in hand.

However, it is usually unknown which variables are truly null. If we construct the null distribution by permuting the output values of the data, it seems inevitable to train multiple networks from scratch in parallel. For this reason, we propose to introduce/add a number of variables that are known/generated to be null. We call these variables ``surrogate null variables'' (or ``surrogate variables'' for short). These variables will be concatenated with the original variables to form a larger data matrix.

To be precise, suppose there are $p$ original variables and $n$ training samples (including validation samples). Then after we add $q$ surrogate variables, the new data matrix will be of size $n\times(p+q)$, which binds the original $n\times p$ data matrix $\bm{X}$ with a $n\times q$ data matrix for surrogate variables $\bm{X}_s$. It is assumed that the original variables are distributed in similar ranges or have been standardized, which is a suggested pre-processing step as it benefits the training of the network, and the elements in $\bm{X}_s$ are sampled with replacement (or without replacement when $q \le p$) from the elements in $\bm{X}$. As a result, the $q$ surrogate variables are null, and their importance scores give the null distribution.

We recommend $q$ to be on the same scale as $p$ (see Conclusions and Discussion for a more detailed discussion about the choice of $q$). For convenience, $q$ takes the same value as $p$ in all experiments in this paper. In this case, the elements in $\bm{X}_s$ can be generated by permuting the elements in $\bm{X}$.

The selection procedure of SurvNet starts with using all $p+q$ variables as inputs. Then at each step, it eliminates a number of least important variables, including both original variables and surrogate variables. The remaining variables are used to continue training the network, and the elimination stops once the FDR falls below the cutoff.

\subsubsection*{FDR estimation}
Then we consider how to estimate FDR at any given time of the selection process. Suppose $r$ variables are retained in the network, among which there are $r_0$ surrogate variables, then $r_0/q$ proportion of surrogate (null) variables have not been eliminated yet. Accordingly, one would expect that roughly the same proportion of null original variables still exist at this time, that is, approximately $\frac{r_0}{q} \cdot p_0$ variables among the remaining original variables are falsely called significant, where $p_0$ is the number of null variables in the original dataset. Thus, an estimate of the FDR of the $r-r_0$ original variables is given by
\begin{equation}
	\tilde{\eta}=\frac{\frac{r_0}{q} \cdot p_0}{r-r_0}
\end{equation}
In practice, however, $p_0$ is unknown, and a common strategy is to replace it with its upper bound $p$ \cite{storey2003statistical}. Hence we have the following estimated FDR,
\begin{equation}
	\hat{\eta}=\frac{\frac{r_0}{q} \cdot p}{r-r_0}=\frac{r_0}{r-r_0} \cdot \frac{p}{q}
\label{eq:est_fdr}
\end{equation}
Apparently, when $\hat{\eta}$ is controlled to be no greater than a pre-specified threshold $\eta^*$, $\tilde{\eta}$ is guaranteed to be no greater than $\eta^*$ as well. When $q=p$, $\hat{\eta}$ can be simplified as $\frac{r_0}{r-r_0}$.

\subsubsection*{Determination of the number of variables to eliminate}
If the estimated FDR $\hat{\eta}$ (given by equation \ref{eq:est_fdr}) is less than or equal to the FDR cutoff $\eta^*$, the variable selection procedure stops. Otherwise, the procedure proceeds, and we want to decide how many variables to eliminate among the $r$ variables that are still in the model. Let this number be $m$, and the determination of $m$ is based on the following considerations. On one hand, we expect that the elimination process is time-saving and reaches the FDR threshold quickly; on the other hand, we want to avoid eliminating too many variables at each step, in which case the FDR may fall much lower than the threshold. We have
\begin{thm}
	If $m$ variables are further eliminated from the current model, the smallest possible estimated FDR after this step of elimination is
	\begin{equation}
	\min \hat{\eta}^{\rm new}= (1-\frac{m}{r_0})\cdot \hat{\eta},
	\label{eq:rm_new}
	\end{equation}
	where $r_0$ is the number of surrogate variables that are in the model before this step of elimination.
\end{thm}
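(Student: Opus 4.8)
The plan is to treat $\hat{\eta}^{\rm new}$ as a function of how the $m$ eliminated variables are distributed between the surrogate pool and the original pool, and then minimize over all such distributions. First I would parametrize the step: suppose that of the $m$ variables removed, $m_s$ are surrogate variables and the remaining $m_o = m - m_s$ are original variables. After this elimination the number of retained surrogate variables becomes $r_0 - m_s$ and the number of retained original variables becomes $(r-r_0) - m_o$. Substituting these counts into the FDR estimate of equation \ref{eq:est_fdr} yields
\begin{equation}
\hat{\eta}^{\rm new}(m_s) = \frac{r_0 - m_s}{(r-r_0) - (m - m_s)} \cdot \frac{p}{q}.
\end{equation}

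Next I would establish monotonicity in $m_s$. When $m_s$ is increased by one with $m$ held fixed (so $m_o$ decreases by one), the numerator $r_0 - m_s$ strictly decreases while the denominator $(r-r_0) - (m - m_s)$ strictly increases; since both remain positive, the ratio, and hence $\hat{\eta}^{\rm new}$, is strictly decreasing in $m_s$. Therefore the minimum is attained at the largest admissible value of $m_s$, namely $m_s = m$ and $m_o = 0$, which corresponds to drawing all eliminated variables from the surrogate pool and is feasible exactly when $m \le r_0$.

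Finally I would substitute $m_s = m$ and simplify, which gives
\begin{equation}
\min \hat{\eta}^{\rm new} = \frac{r_0 - m}{r-r_0} \cdot \frac{p}{q} = \frac{r_0 - m}{r_0} \cdot \left( \frac{r_0}{r-r_0} \cdot \frac{p}{q} \right) = \left(1 - \frac{m}{r_0}\right) \cdot \hat{\eta},
\end{equation}
where the last equality invokes the definition of $\hat{\eta}$ in equation \ref{eq:est_fdr}. This is precisely the claimed identity \ref{eq:rm_new}.

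The argument is elementary, and the only point worth flagging is the feasibility constraint: the formula is valid in the regime $m \le r_0$, so that all $m$ removed variables can come from the surrogate pool. If instead $m > r_0$, one would be forced to discard at least $m - r_0$ original variables, the retained surrogate count would hit zero, and the true minimum estimated FDR would simply be $0$ rather than the negative value the formula returns; I take $m \le r_0$ to be the intended setting. The main ``obstacle,'' such as it is, lies in recognizing that minimizing the FDR estimate reduces to this single monotonicity observation rather than anything requiring calculus or a delicate bound.
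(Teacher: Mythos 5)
Your proof is correct and follows essentially the same route as the paper's: parametrize by the number of surrogate variables among the $m$ eliminated, observe that $\hat{\eta}^{\rm new}$ is monotonically decreasing in that count, set it equal to $m$, and substitute the identity $\frac{1}{r-r_0}\cdot\frac{p}{q}=\frac{\hat{\eta}}{r_0}$ from equation \ref{eq:est_fdr}. Your explicit justification of the monotonicity and your remark on the feasibility constraint $m \le r_0$ are small additions the paper leaves implicit (its chosen step size $m=\lceil(1-\frac{\eta^*}{\hat{\eta}})\,r_0\rceil$ never exceeds $r_0$), but the argument is the same.
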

\begin{proof}
	Suppose there are $m_0$ surrogate variables among the $m$ variables to be eliminated, $0 \le m_0 \le m$, then according to equation \ref{eq:est_fdr}, $\hat{\eta}$ will be updated to
	\begin{equation}
	\hat{\eta}^{\rm new}=\frac{r_0-m_0}{r-r_0-(m-m_0)} \cdot \frac{p}{q}.
	\end{equation}
	Note that $\hat{\eta}^{\rm new}$ is monotonically decreasing with respect to $m_0$ for any fixed  $m$, we have
	\begin{equation}
	\min \hat{\eta}^{\rm new}=\hat{\eta}^{\rm new}|_{m_0=m}=\frac{r_0-m}{r-r_0} \cdot \frac{p}{q}.
	\label{eq:rm_new_int}
	\end{equation}
	Equation \ref{eq:est_fdr} indicates that $\frac{1}{r - r_0} \cdot \frac{p}{q}=\frac{\hat{\eta}}{r_0}$. Plugging it into \ref{eq:rm_new_int}, we have
	\[
	\min \hat{\eta}^{\rm new}=(r_0-m) \cdot \frac{\hat{\eta}}{r_0} = (1-\frac{m}{r_0})\cdot \hat{\eta}.
	\]
\end{proof}

It follows from equation \ref{eq:rm_new} that $\min \hat{\eta}^{\rm new} = \eta^*$ when $m=(1-\frac{\eta^*}{\hat{\eta}}) \cdot r_0$. Also, note that $\min \hat{\eta}^{\rm new}$ is a monotonically decreasing function of $m$. Therefore, when $m < (1-\frac{\eta^*}{\hat{\eta}}) \cdot r_0$, $\min \hat{\eta}^{\rm new} > \eta^*$ and thus $\hat{\eta}^{\rm new} > \eta^*$. That is,
\begin{cor}
	When $m < (1-\frac{\eta^*}{\hat{\eta}}) \cdot r_0$, the estimated FDR after this step of elimination $\hat{\eta}^{\rm new}$ is guaranteed to be still greater than the FDR cutoff $\eta ^*$.
\end{cor}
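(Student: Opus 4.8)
The plan is to deduce the Corollary directly from the Claim (equation \ref{eq:rm_new}), which already supplies the sharp lower bound $\min \hat{\eta}^{\rm new} = (1-\frac{m}{r_0}) \cdot \hat{\eta}$ on the post-elimination estimated FDR. The entire argument is a one-variable monotonicity computation followed by a single logical step transferring an inequality from the minimum to the actual value, so no new machinery is required.

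First I would record the standing assumption that makes the Corollary meaningful: we are in the branch of the procedure where elimination continues, i.e. $\hat{\eta} > \eta^*$. In particular $\hat{\eta} > 0$ and $r_0 > 0$, so the quantity $(1 - \frac{\eta^*}{\hat{\eta}}) \cdot r_0$ is a well-defined positive number, and the map $m \mapsto (1 - \frac{m}{r_0}) \cdot \hat{\eta}$ is a strictly decreasing (affine) function of $m$. Next I would locate the threshold: setting the right-hand side of equation \ref{eq:rm_new} equal to the cutoff, $(1 - \frac{m}{r_0}) \cdot \hat{\eta} = \eta^*$, and solving for $m$ yields exactly $m = (1 - \frac{\eta^*}{\hat{\eta}}) \cdot r_0$. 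By the strict monotonic decrease just noted, every $m$ strictly below this critical value satisfies $\min \hat{\eta}^{\rm new} > \eta^*$.

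The final — and only genuinely logical — step is to pass from the minimum to the realized value. By construction in the Claim, $\min \hat{\eta}^{\rm new}$ was obtained by minimizing $\hat{\eta}^{\rm new}$ over all admissible counts $m_0$ of eliminated surrogate variables, so it is a lower bound: $\hat{\eta}^{\rm new} \geq \min \hat{\eta}^{\rm new}$ no matter how the eliminated variables split between originals and surrogates. Chaining this with the previous inequality gives $\hat{\eta}^{\rm new} \geq \min \hat{\eta}^{\rm new} > \eta^*$, which is the desired conclusion.

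I expect the main (indeed essentially the only) point requiring care to be this last transfer: the Corollary asserts something about the \emph{actual} updated FDR, whereas equation \ref{eq:rm_new} controls only its \emph{best case}, and it is the lower-bound property baked into the Claim that licenses the conclusion. The integrality of $m$, and whether the threshold is itself an integer, are irrelevant here, since the hypothesis is a strict inequality on $m$ and the implication holds for every real $m$ below the threshold.
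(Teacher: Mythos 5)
Your proposal is correct and follows essentially the same route as the paper: solve $(1-\frac{m}{r_0})\cdot\hat{\eta}=\eta^*$ for the threshold value of $m$, invoke the monotone decrease of $\min\hat{\eta}^{\rm new}$ in $m$, and then pass from the minimum to the actual $\hat{\eta}^{\rm new}$ via the lower-bound property. Your explicit remark that the last step is the only genuinely logical one is a fair observation --- the paper compresses it into ``and thus'' --- but the content is identical.
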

On the other hand, when $m \ge (1-\frac{\eta^*}{\hat{\eta}}) \cdot r_0$, $\min \hat{\eta}^{\rm new} \le \eta^*$. That is,
\begin{cor}
	When $m \ge (1-\frac{\eta^*}{\hat{\eta}}) \cdot r_0$, the estimated FDR after this step of elimination $\hat{\eta}^{\rm new}$ may reach the FDR cutoff $\eta ^*$.
\end{cor}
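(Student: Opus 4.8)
The plan is to obtain this corollary as the exact complement of the preceding corollary, reading it directly off equation \ref{eq:rm_new}. The central quantity is $\min \hat{\eta}^{\rm new} = (1 - m/r_0)\,\hat{\eta}$, the smallest value the updated estimated FDR can take as the split of the $m$ eliminated variables into surrogate and original ones varies; recall from the proof of the Claim that this minimum is genuinely attained, namely when all $m$ eliminated variables are surrogate variables ($m_0 = m$). The goal is to show that the hypothesis $m \ge (1 - \eta^*/\hat{\eta})\,r_0$ forces $\min \hat{\eta}^{\rm new} \le \eta^*$, so that the updated FDR $\hat{\eta}^{\rm new}$ can in fact fall to or below the cutoff.

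First I would establish the algebraic equivalence between the hypothesis and the target inequality. Dividing $m \ge (1 - \eta^*/\hat{\eta})\,r_0$ by $r_0 > 0$ gives $m/r_0 \ge 1 - \eta^*/\hat{\eta}$, hence $1 - m/r_0 \le \eta^*/\hat{\eta}$, and multiplying by $\hat{\eta} > 0$ yields $(1 - m/r_0)\,\hat{\eta} \le \eta^*$. By equation \ref{eq:rm_new} the left-hand side is exactly $\min \hat{\eta}^{\rm new}$, so the hypothesis is equivalent to $\min \hat{\eta}^{\rm new} \le \eta^*$. All the manipulations are reversible and use only the positivity of $r_0$ and $\hat{\eta}$, so this step is routine.

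I would then convert this bound into the existence statement that the corollary asserts. Since $\min \hat{\eta}^{\rm new}$ is attained at $m_0 = m$, the inequality $\min \hat{\eta}^{\rm new} \le \eta^*$ says precisely that there is at least one admissible pattern of eliminations --- the one in which every removed variable is a surrogate --- for which $\hat{\eta}^{\rm new} \le \eta^*$. That is exactly the meaning of ``$\hat{\eta}^{\rm new}$ may reach the FDR cutoff,'' which finishes the proof.

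The only subtle point --- and the real contrast with the previous corollary --- is the logical force of ``may.'' There the bound $\min \hat{\eta}^{\rm new} > \eta^*$ applies to every configuration and therefore guarantees $\hat{\eta}^{\rm new} > \eta^*$ unconditionally; here the bound is on the minimum, so the conclusion is only a possibility claim, because the realized value of $\hat{\eta}^{\rm new}$ depends on how many of the genuinely least-important variables turn out to be surrogates (the quantity $m_0$), which the procedure does not control in advance. The one thing I would be careful about is therefore the phrasing: the conclusion must be stated as ``$\hat{\eta}^{\rm new} \le \eta^*$ is possible,'' matching the word ``may,'' rather than as an unconditional inequality.
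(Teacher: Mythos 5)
Your proposal is correct and takes essentially the same route as the paper: the paper also reads the corollary directly off equation \ref{eq:rm_new}, observing that $m \ge (1-\frac{\eta^*}{\hat{\eta}})\cdot r_0$ is equivalent to $\min \hat{\eta}^{\rm new} \le \eta^*$. Your additional remarks on the attainment of the minimum at $m_0=m$ and the precise logical force of ``may'' merely make explicit what the paper leaves implicit.
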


Corollary 1 says that $m$ values less than $(1-\frac{\eta^*}{\hat{\eta}}) \cdot r_0$ are ``safe'' but the elimination will not stop after this step. Corollary 2 says that $m$ values much larger than $(1-\frac{\eta^*}{\hat{\eta}}) \cdot r_0$ may not be ``safe'' anymore. Taking both into consideration, we choose the step size to be
\begin{equation}
    m=\lceil (1-\frac{\eta^*}{\hat{\eta}}) \cdot r_0 \rceil,
    \label{eq:m}
\end{equation}
where $\lceil \cdot \rceil$ denotes ``ceiling'', i.e. the smallest integer that is no less than $\cdot$. Notice that when $\hat{\eta} > \eta^*$, which is the premise of continuing to eliminate variables, $1-\frac{\eta^*}{\hat{\eta}}>0$, and $r_0>0$ as well since $\hat{\eta}$ is positive. Thus $m$ is ensured to be no less than 1 at each step of variable elimination.

This form of $m$ seems to be quite reasonable for the following reasons. First, if there still remain a great number of surrogate variables in the network, clearly more of them should be taken out. As $r_0$ decreases, $m$ will be smaller, and this makes sense since one should be more careful in further elimination. Second, when $\hat{\eta}$ is much higher than $\eta^*$, one will naturally expect a larger $m$ so that the updated estimated FDR will approach this cutoff.

Using the $m$ determined by equation \ref{eq:m}, there is a chance that the estimated FDR will get to the cutoff in only one step. Many times such a fast pace is not preferred as removing too many inputs at a time may make our warm start of the training not warm any more. Hence we may introduce an ``elimination rate'' $\varepsilon$, which is a constant between 0 and 1, and take
\begin{equation}
    m=\lceil \varepsilon \cdot (1-\frac{\eta^*}{\hat{\eta}}) \cdot r_0 \rceil.
\end{equation}

\section*{Author Contributions}
J.L. conceived the study, J.L. and Z.S. proposed the methods, Z.S. implemented the methods and constructed the data analysis, Z.S. drafted the manuscript, J.L. substantively revised it.

\section*{Competing Interests statement}
The authors declare no competing interests.

\bibliographystyle{unsrt}







\pagebreak
\section*{Tables}
\begin{table}[h]
	\centering
	\begin{tabular}{ccccccccc}
		\hline
		\multirow{2}{*}{} & \multicolumn{2}{c}{test loss} & \multicolumn{2}{c}{test error (\%)} & \multicolumn{2}{c}{\# of variables} & \multicolumn{2}{c}{FDR} \\ \cline{2-9}
		& initial       & final         & initial           & final           & original        & significant       & estimated    & actual   \\ \hline
		dataset 1          & \begin{tabular}[c]{@{}c@{}}1.177e-2\\ (5.617e-3)\end{tabular}      & \begin{tabular}[c]{@{}c@{}}1.172e-2\\ (6.474e-3)\end{tabular}      & \begin{tabular}[c]{@{}c@{}}0.36\\ (0.17)\end{tabular}              & \begin{tabular}[c]{@{}c@{}}0.27\\ (0.10)\end{tabular}            & \begin{tabular}[c]{@{}c@{}}69.36\\ (5.07)\end{tabular}           & \begin{tabular}[c]{@{}c@{}}61.92\\ (2.48)\end{tabular}             & \begin{tabular}[c]{@{}c@{}}0.093\\ (0.004)\end{tabular}        & \begin{tabular}[c]{@{}c@{}}0.105\\ (0.044)\end{tabular}    \\ \hline
		dataset 2          & \begin{tabular}[c]{@{}c@{}}4.400e-4\\ (1.697e-3)\end{tabular}         & \begin{tabular}[c]{@{}c@{}}3.220e-5\\ (1.549e-4)\end{tabular}         & \begin{tabular}[c]{@{}c@{}}0.00\\ (0.00)\end{tabular}              & \begin{tabular}[c]{@{}c@{}}0.00\\ (0.00)\end{tabular}            & \begin{tabular}[c]{@{}c@{}}66.88\\ (8.73)\end{tabular}           & \begin{tabular}[c]{@{}c@{}}59.36\\ (5.87)\end{tabular}             & \begin{tabular}[c]{@{}c@{}}0.094\\ (0.005)\end{tabular}        & \begin{tabular}[c]{@{}c@{}}0.107\\ (0.057)\end{tabular}    \\ \hline
		dataset 3          & \begin{tabular}[c]{@{}c@{}}7.046e-1\\ (1.098e-2)\end{tabular}         & \begin{tabular}[c]{@{}c@{}}1.866e-2\\ (1.385e-2)\end{tabular}         & \begin{tabular}[c]{@{}c@{}}49.42\\ (1.69)\end{tabular}             & \begin{tabular}[c]{@{}c@{}}0.47\\ (0.48)\end{tabular}            & \begin{tabular}[c]{@{}c@{}}26.40\\ (13.68)\end{tabular}           & \begin{tabular}[c]{@{}c@{}}23.00\\ (11.87)\end{tabular}             & \begin{tabular}[c]{@{}c@{}}0.076\\ (0.031)\end{tabular}        & \begin{tabular}[c]{@{}c@{}}0.114\\ (0.089)\end{tabular}    \\ \hline
	\end{tabular}
	\caption{Summary statistics of variable selection on the simulation datasets $1 \sim 3$ (averaged over 25 simulations) when $p^\prime=64,\eta^*=0.1,\varepsilon=1$. The numbers in parentheses are corresponding standard deviations.}
	\label{tab:1}
\end{table}

\pagebreak
\section*{Figures}
\begin{figure}[h]
\centering
\includegraphics[width=0.5\linewidth]{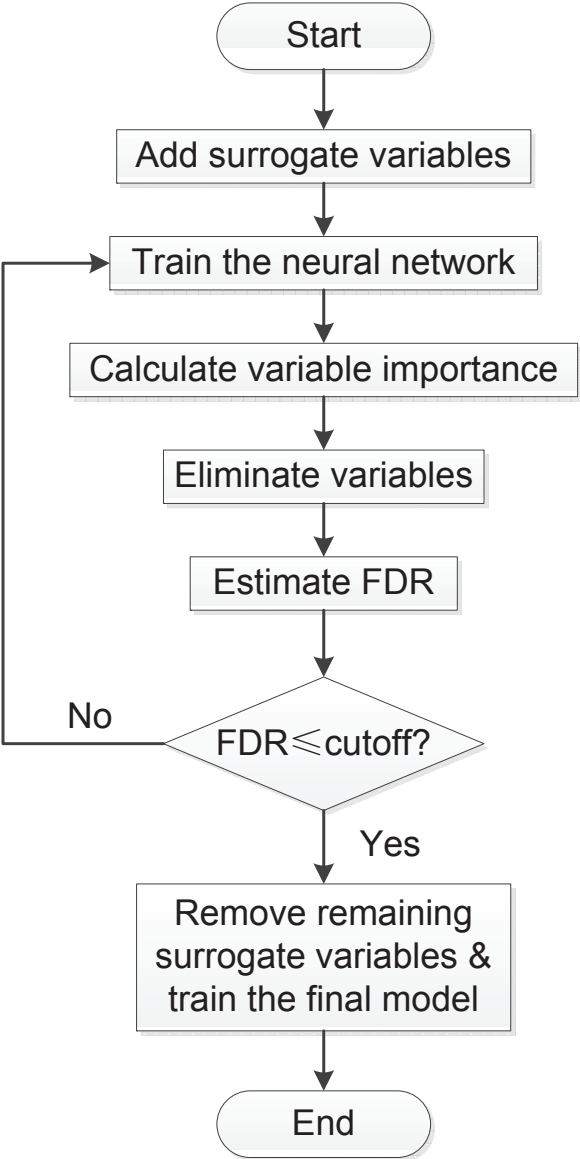}
\caption{Flowchart of SurvNet.}
\label{fig:1}
\end{figure}

\pagebreak
\begin{figure}[h]
\centering
\includegraphics[width=1\linewidth]{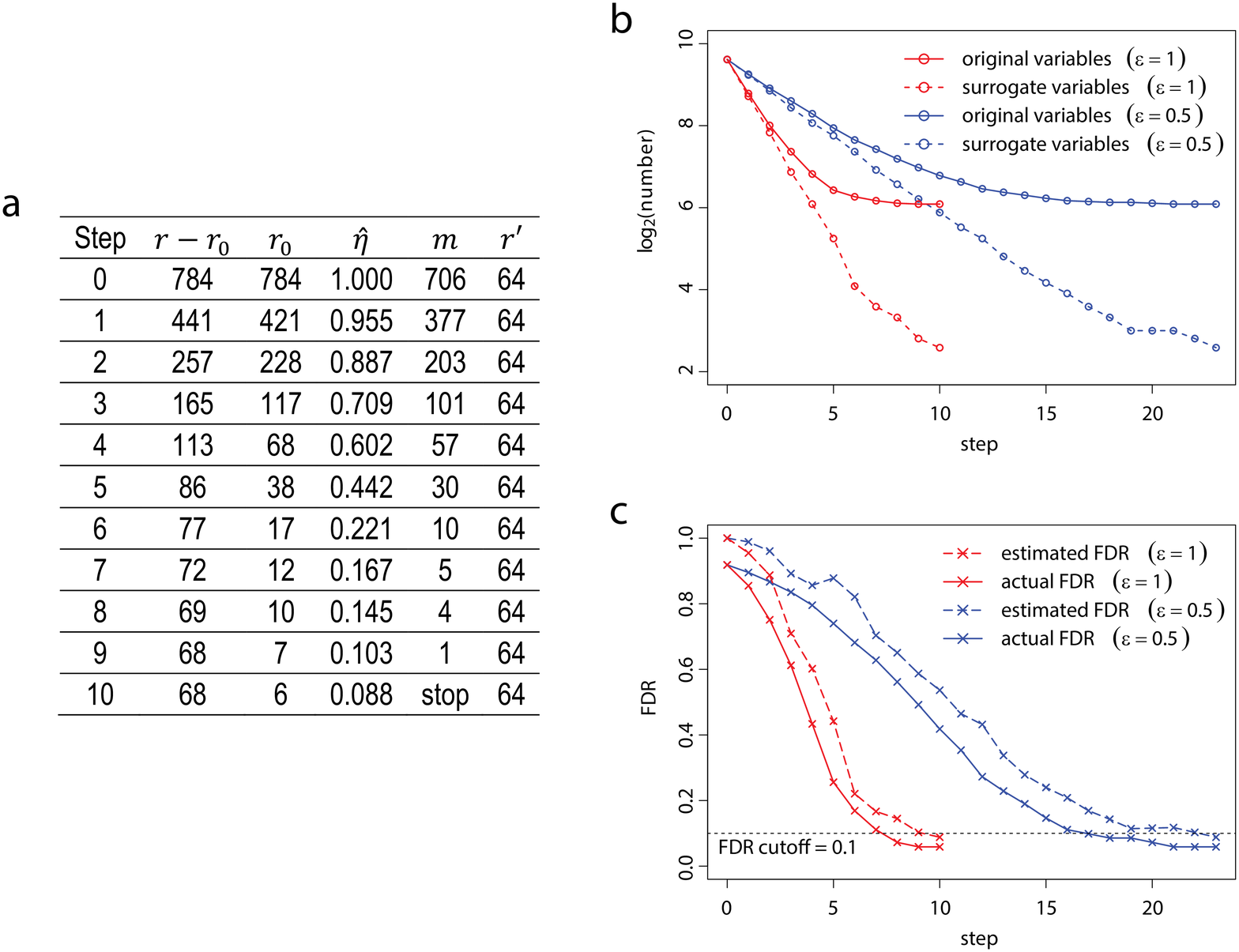}
\caption{Variable selection on one simulation dataset with independent variables. (a) The number of original variables ($r-r_0$), surrogate variables ($r_0$), and significant variables ($r^\prime$) left at each step of the selection process, together with the estimated FDR ($\hat{\eta}$) and the number of variables to be eliminated in the next step ($m$), when $p^\prime=64$, $\eta^*=0.1$, and $\varepsilon=1$. (b) The number of original and surrogate variables along the selection processes with different elimination rates when $p^\prime=64$ and $\eta^*=0.1$. (c) The estimated and actual value of FDR along the selection processes with different elimination rates when $p^\prime=64$ and $\eta^*=0.1$.}
\label{fig:2}
\end{figure}

\pagebreak
\begin{figure}[h]
\centering
\includegraphics[width=1\linewidth]{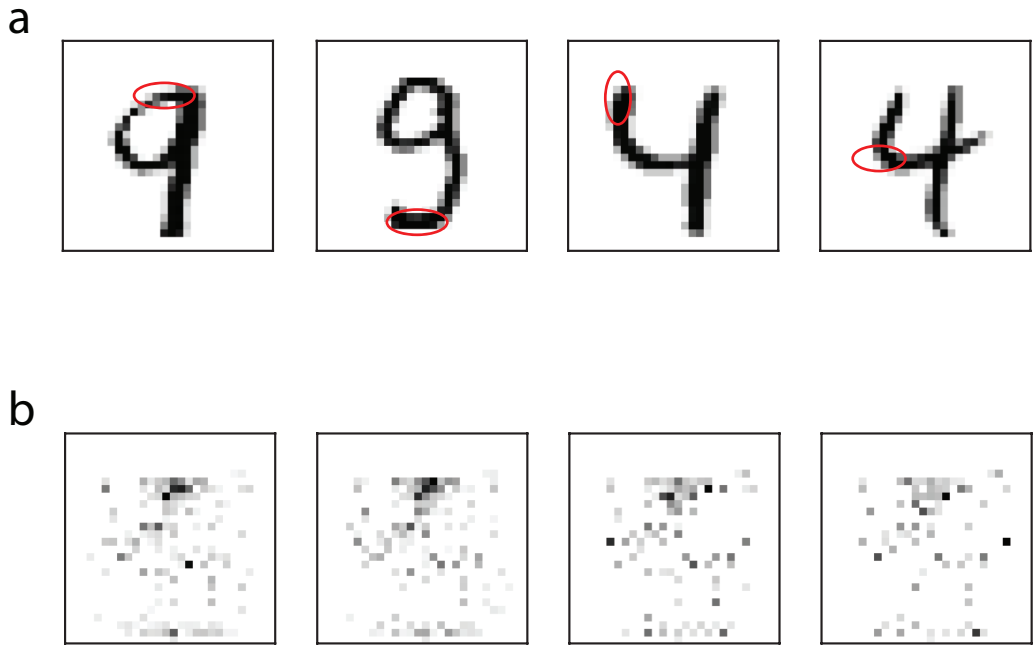}
\caption{Variable selection on the MNIST dataset of digit 4 and 9. (a) Examples of handwritten digits 4 and 9 (two images for each). The circles mark the locations of distinctive pixels of these two digits. (b) Heatmaps of the $28\times28$ pixels under four conditions with different FDR cutoffs and elimination rates, which display the relative importance of the remaining pixels. The darker the color of a pixel, the more important it is. The corresponding conditions are (from left to right): $\eta^*=0.1$, $\varepsilon=1$; $\eta^*=0.1$, $\varepsilon=0.5$; $\eta^*=0.01$, $\varepsilon=1$; $\eta^*=0.01$, $\varepsilon=0.5$.}
\label{fig:3}
\end{figure}

\pagebreak
\begin{figure}[h]
\centering
\includegraphics[width=1\linewidth]{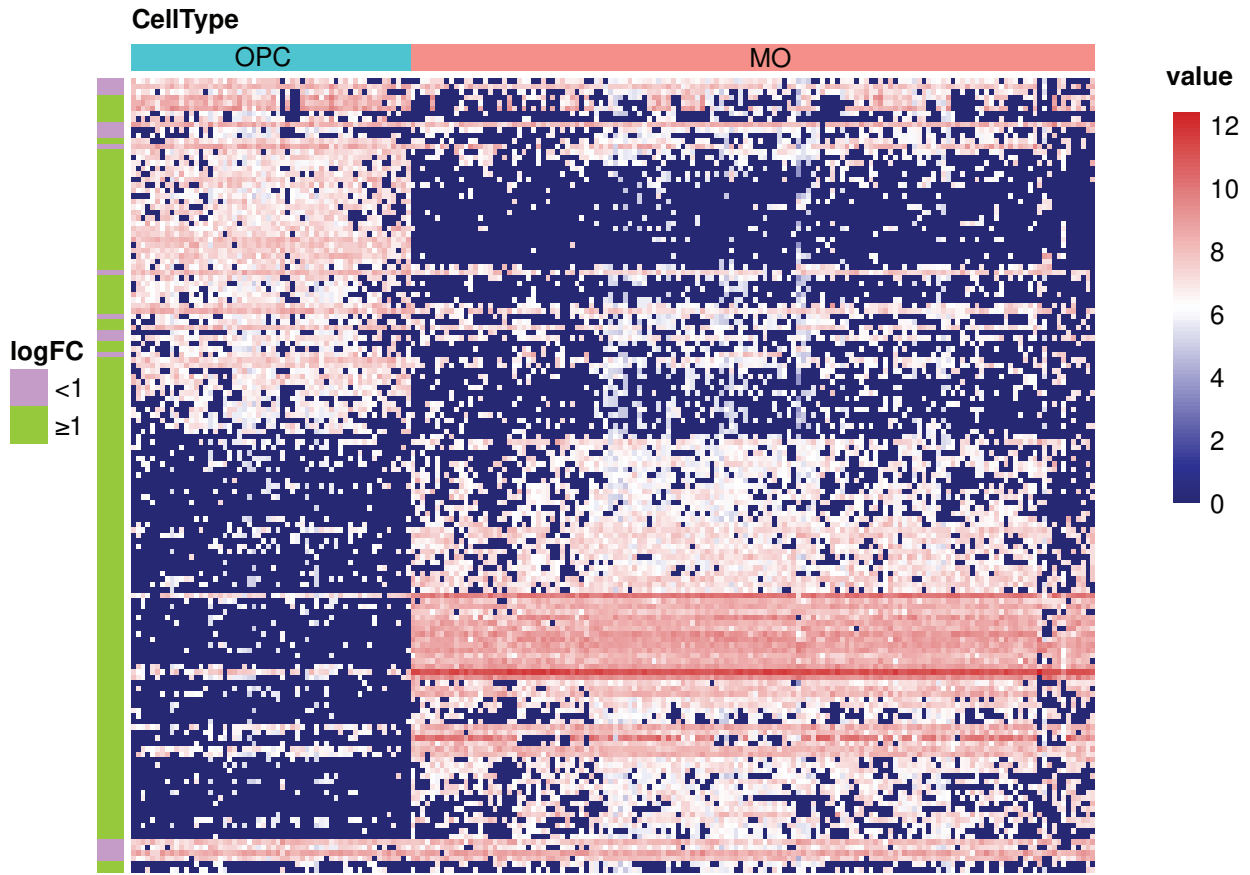}
\caption{Heatmap showing the expression of the selected genes in the single-cell RNA-Seq dataset in two groups of cells. Rows represent individual genes and columns are 200 randomly chosen cells. The genes whose log-fold changes in OPCs and MOs are less than 1 are distinguished from others.}
\label{fig:4}
\end{figure}



\end{document}